\newcommand\defeq{:=}
\DeclareMathOperator*{\argmin}{\mbox{arg\;min}}
\DeclareMathOperator*{\argmaximum}{\mbox{arg\;max}}
\newcommand\vect[1]{\mathbf #1}
\newcommand{\vx}{\vect{x}}  
\newcommand{\vy}{\vect{y}}  
\newcommand{\vz}{\vect{z}}
\newcommand{\mD}{\mathbf{D}}
\newcommand{\mW}{\mathbf{W}}
\newcommand{\signalsize}{N}
\newcommand{\maxiter}{K}
\newcommand{\graphsigs}{\mathbb{R}^{|\nodes|}} 
\newcommand{\edgesigs}{\mathbb{R}^{|\edges|}}
\newcommand{\dataset}{\mathcal{D}}
\newcommand{\edges}{\mathcal{E}}
\newcommand{\edgeset}{\mathcal{S}}
\newcommand{\nodes}{\mathcal{V}}
\newcommand{\graph}{\mathcal{G}}
\newcommand{\samplingset}{\mathcal{M}}
\newcommand{\xsig}{\vx}
\newcommand{\mU}{\mathbf{U}}
\newcommand{\mQ}{\mathbf{Q}}
\newcommand{\primslp}{\hat{\vx}_{\rm SLP}}
\newcommand{\dualslp}{\hat{\vy}_{\rm SLP}}
\algnewcommand\algorithmicinput{\textbf{Input:}}
\algnewcommand\INPUT{\item[\algorithmicinput]}
\algnewcommand\algorithmicoutput{\textbf{Output:}}
\algnewcommand\OUTPUT{\item[\algorithmicoutput]}
\newtheorem{theorem}{Theorem}
\title{On the Complexity of Sparse Label Propagation}
\name{\hspace*{-10mm} Alexander Jung 
}
\address{\normalsize Department of Computer Science, Aalto University, Finland; firstname.lastname(at)aalto.fi \\[-0.5mm]
}
\begin{document}

\onecolumn
\maketitle

\begin{abstract}
This paper investigates the computational complexity of sparse label propagation which has been proposed recently 
for processing network-structured data. Sparse label propagation amounts to a convex optimization problem and 
might be considered as an extension of basis pursuit from sparse vectors to clustered graph signals representing the 
label information contained in network-structured datasets. Using a standard first-order oracle model, we characterize 
the number of iterations for sparse label propagation to achieve a prescribed accuracy. In particular, we derive an upper 
bound on the number of iterations required to achieve a certain accuracy and show that this upper bound is sharp for 
datasets having a chain structure (e.g., time series). 
\end{abstract}

\begin{keywords}
  graph signal processing, semi-supervised learning, convex optimization, compressed sensing, complexity, complex networks, big data
\end{keywords}

\section{Introduction}
\label{sec_intro}
A powerful approach to processing massive datasets is via using graph models. In particular, we 
consider datasets which can be characterized by an ``empirical graph'' (cf.\ \cite[Ch. 11]{SemiSupervisedBook}) 
whose nodes represent individual data points and whose edges connect data points which are similar in an 
application-specific sense. The empirical graph for a particular dataset might be obtained by (domain) expert 
knowledge, an intrinsic network structure (e.g., for social network data) or in a data-driven fashion by imposing 
smoothness constrains on observed realizations of graph signals (which serve as training data) 
\cite{ZhuLuo2017,pmlr-v51-kalofolias16,CSGraphSelJournal,JungGaphLassoSPL,JuHeck2014,QuangJung2017,QuangJung2018}. 
Besides the graph structure, datasets carry additional information in the form of labels (e.g., class membership) 
associated with individual data points. We will represent such label information as graph signals defined over 
the empirical graph \cite{Zhu02learningfrom}. 

Using graph signals for representing datasets is appealing for several reasons. Indeed, having a graph model for 
datasets facilitates scalable distributed data processing in the form of message passing over the empirical graph \cite{SandrMoura2014}. 
Moreover, graph models allow to cope with heterogeneous datasets containing mixtures of different data types, 
since they only require an abstract notion of similarity between individual data points. In particular, the structure 
encoded in the graph model of a dataset enables to capitalize, by exploiting the similarity between data points, 
on massive amounts of unlabeled data via semi-supervised learning \cite{SemiSupervisedBook}. This is important, 
since labelling of data points is often expensive and therefore label information is typically available only for a small 
fraction of the overall dataset. The labels of individual data points induce a graph signal which is defined over the 
associated empirical graph. We typically only have access to the signal values (labels) of few data points and the 
goal is learn or recover the remaining graph signal values (labels) for all other data points. 

The processing of graph signals relies on particular models for graph signals. A prominent line of uses applies 
spectral graph theory to extend the notion of band-limited signals from the time domain (which corresponds to 
the special case of a chain graph) to arbitrary graphs \cite{Zhu02learningfrom,SemiSupervisedBook,ChenMoura2014,ChenSanKov2015,ChenVarma,Gadde2014}.
These band-limited graph signals are smooth in the sense of having a small variation over well-connected subsets 
of nodes, where the variation is measured by the Laplacian quadratic form. However, our approach targets datasets whose 
labels induce piece-wise constant graph signals, i.e., the signal values (labels) of data points belonging to well connected 
subset of data points (clusters), are nearly identical. This signal model is useful, e.g., in change-point detection, image 
segmentation or anomaly detection where signal values might change abruptly \cite{SharpnackJMLR2012,TrendGraph,NetworkLasso,ChenClustered2016,FanGuan2017}. 

The closest to our work is \cite{SharpnackJMLR2012,TrendGraph,NetworkLasso} for general graph models, 
as well as a line of work on total variation-based image processing \cite{chambolle2004algorithm,pock_chambolle_2016,PrecPockChambolle2011}. 
In contrast to \cite{chambolle2004algorithm,pock_chambolle_2016,PrecPockChambolle2011}, 
which consider only regular grid graphs, our approach applies to arbitrary graph topology. 
The methods presented in \cite{SharpnackJMLR2012,TrendGraph,FanGuan2017} apply also to arbitrary graph 
topologies but require (noisy) labels available for all data points, while we consider labels available only on a small subset of nodes. 

\subsection{Contributions and Outline} 

In Section \ref{sec_setup}, we formulate the problem of recovering clustered graph signals 
as a convex optimization problem. We solve this optimization problem by applying a preconditioned variant of the 
primal-dual method of Pock and Chambolle \cite{PrecPockChambolle2011}. As detailed in Section \ref{sec_spl_Alg}, the resulting 
algorithm can be implemented as a highly scalable message passing protocol, which we coin sparse label propagation (SLP). 
In Section \ref{sec_main_results}, we present our main result which is an upper bound on the number of SLP iterations 
ensuring a particular accuracy. We also discuss the tightness of the upper bound for datasets whose empirical graph 
is a chain graph (e..g, time series). 

\subsection{Notation}
Given a vector $\vx\!=\!(x_{1},\ldots,x_{n})^{T}\in \mathbb{R}^{n}$, we define the norms $\| \vx \|_{1} \defeq \sum_{l=1}^{n} |x_{l}|$ and 
$\| \vx \|_{2} \defeq \sqrt{ \sum_{l=1}^{n} (x_{l})^{2}}$, respectively. 
The spectral norm of a matrix $\mathbf{D}$ is denoted $\| \mD \|_{2} \defeq \sup_{\| \vx \|_{2}=1} \|\mD \vx \|_{2}$. 
For a positive semidefinite (psd) matrix $\mQ \in \mathbb{R}^{n \times n}$, with spectral decomposition $\mQ = \mU{\rm diag} \{ q_{i} \}_{i=1}^{n} \mU^{T}$, 
we define its square root as $\mQ^{1/2} \defeq \mU {\rm diag} \{ \sqrt{q_{i}} \}_{i=1}^{n} \mU^{T}$. 
For a positive definite matrix $\mathbf{Q}$, we define the norm $\| \vx \|_{\mathbf{Q}} \defeq \sqrt{ \vx^{T} \mQ \vx}$. 
The signum ${\rm sign } \{ \vx \}$ of a vector $\vx=\big(x_{1},\ldots,x_{d}\big)$ is defined as the vector $\big({\rm sign } (x_{1}),\ldots,{\rm sign } (x_{d}) \big) \in \mathbb{R}^{d}$ 
with the scalar signum function 
\begin{equation} 
{\rm sign } (x_{i}) = \begin{cases} -1 & \mbox{ for } x_{i} < 0 \\ 0 & \mbox{ for } x_{i}= 0 \\ 1 & \mbox{ for } x_{i} > 0. \end{cases}
\end{equation} 

Throughout this paper we consider convex functions $g(\vx)$ whose epigraphs 
${\rm epi } \,g \defeq \{ (\vx,t):  \vx \in \mathbb{R}^{n}, g(\vx) \leq t \} \subseteq \mathbb{R}^{n} \times \mathbb{R}$ are non-empty closed convex sets \cite{RockafellarBook}. 
Given such a convex function $g(\vx)$, we denote its subdifferential at $\vx_{0} \in \mathbb{R}^{n}$ by 
\begin{equation} 
\partial g(\vx_{0}) \defeq \{ \vy \in \mathbb{R}^{n}: g(\vx) \geq g(\vx_{0})\!+\!\vy^{T}(\vx\!-\!\vx_{0}) \mbox{ for any } \vx \} \subseteq \mathbb{R}^{n} \nonumber 
\end{equation}  
and its convex conjugate function by \cite{BoydConvexBook}
\begin{equation}
\label{equ_def_convex_conjugate}
g^{*}(\hat{\vy}) \defeq \sup_{\vy \in \mathbb{R}^{n}} \vy^{T}\hat{\vy}- g( \vy). 
\end{equation} 
We can re-obtain a convex function $g(\vy)$ from its convex conjugate via \cite{BoydConvexBook} 
\begin{equation}
\label{equ_inv_conjugate}
 g(\hat{\vy}) \defeq \sup_{\vy \in \mathbb{R}^{n}} \vy^{T}\hat{\vy}- g^{*}( \vy). 
\end{equation}


\section{Problem Setting}
\label{sec_setup}

We consider network-structured datasets which are represented by an undirected weighted graph $\graph = (\nodes, \edges, \mathbf{W})$, 
referred to as the ``empirical graph'' (see Figure \ref{fig_graph_signals}). The nodes $i \in \nodes$ of the empirical graph represent individual 
data points, such as user profiles in a social network or documents of a repository. An undirected edge $\{i,j\} \in \edges$ of the empirical 
graph encodes a notion of (physical or statistical) proximity of neighbouring data points, such as profiles of befriended social network users 
or documents which have been co-authored by the same person. This network structure is identified with conditional independence relations 
within probabilistic graphical models (PGM) \cite{CSGraphSelJournal,JungGaphLassoSPL,JuHeck2014,QuangJung2017,QuangJung2018}. 

As opposed to PGM, we consider a fully deterministic graph-based model which does not invoke an underlying probability distribution for 
the observed data. In particular, given an edge $\{i,j\} \in \edges$, the nonzero value $W_{i,j}\!>\!0$ represents the amount of similarity 
between the data points $i,j \in \nodes$. The edge set $\edges$ can be read off from the non-zero pattern of the weight 
matrix $\mathbf{W} \!\in\! \mathbb{R}^{\signalsize \times \signalsize}$ since 
\begin{equation}
\label{equ_edge_set_support_weights}
\{ i , j \} \in \edges \mbox{ if and only if } W_{i,j}  > 0.  
\end{equation} 
According to \eqref{equ_edge_set_support_weights}, we could in principle handle network-structured datasets using traditional 
multivariate (vector/matrix based) methods. However, putting the emphasis on the empirical graph leads naturally to scalable 
algorithms which are implemented as message passing methods (see Algorithm \ref{sparse_label_propagation_mp} below). 

The neighbourhood $\mathcal{N}(i)$ and weighted degree (strength) $d_{i}$ of node $i \in \nodes$ are  
\begin{equation} 
\label{equ_def_neighborhood}
\mathcal{N}(i) \defeq \{ j \in \nodes : \{i,j\} \!\in\!\edges \} \mbox{, and } d_{i} \defeq \sum_{j \in \mathcal{N}(i)} W_{i,j}\mbox{, respectively.} 
\end{equation} 
In what follows we assume the empirical graph to be connected, i.e., $d_{i} > 0$ for all nodes $i \in \nodes$ and having no 
self-loops such that $W_{i,i}\!=\!0$ for all $i\!\in\!\nodes$. 
The maximum (weighted) node degree is
\begin{equation}
\label{equ_def_max_node_degree}
 d_{\rm max} \defeq \max_{i \in \mathcal{V}} d_{i} \stackrel{\eqref{equ_def_neighborhood}}{=} \max_{i \in \nodes} \sum_{j \in \mathcal{N}(i)} W_{i,j} . 
\end{equation}
 
It will be convenient to orient the undirected empirical graph $\graph=(\nodes,\edges,\mathbf{W})$, which yields the directed version 
$\overrightarrow{\graph}=(\nodes,\overrightarrow{\edges},\mW)$. The orientation amounts to declaring for each edge $e=\{i,j\}$ one 
node as the head (origin node) and the other node as the tail (destination node) denoted $e^{+}$ and  $e^{-}$, respectively. Given a 
set of edges $\edgeset \subseteq \edges$ in the undirected graph $\graph$, we denote the corresponding set of directed edges in 
$\overrightarrow{\graph}$ as $\overrightarrow{\edgeset}$. The \emph{incidence matrix} $\mD \!\in\! \mathbb{R}^{|\edges| \times |\nodes|}$ 
of $\overrightarrow{\graph}$ is \cite{SharpnackJMLR2012}
\begin{equation}
D_{e,i} = \begin{cases}  W_{e} & \mbox{ if } i = e^{+}  \\ 
				    - W_{e} & \mbox{ if } i = e^{-}  \\ 
				    0 &  \mbox{ else.}  \label{equ_def_incidence_mtx}
				    \end{cases}
\end{equation} 
If we number the nodes and orient the edges in the chain graph in Fig.\ \ref{fig_graph_signals}-(a) from left to right, 
its weighted incidence matrix would be 
\begin{equation}
\mathbf{D} = \begin{pmatrix} W_{1,2} & -W_{1,2} & 0  \\  0 & W_{2,3} & -W_{2,3} \end{pmatrix}. \nonumber
\end{equation} 
The directed neighbourhoods of a node $i\in \nodes$ are defined as $\mathcal{N}_{+}(i) \defeq \{ j \in \nodes: e=\{i,j\} \in \edges \mbox{, and } e^{+}=i \}$ and 
$\mathcal{N}_{-}(i) \defeq \{ j \in \nodes: e=\{i,j\} \in \edges \mbox{, and } e^{-}=i \}$, respectively. 
We highlight that the particular choice of orientation for the empirical graph $\graph$ has no effect on our results andmethods and will be only used for notational convenience.

\begin{figure}
\begin{center}
\includegraphics[width=.6\columnwidth,angle=0]{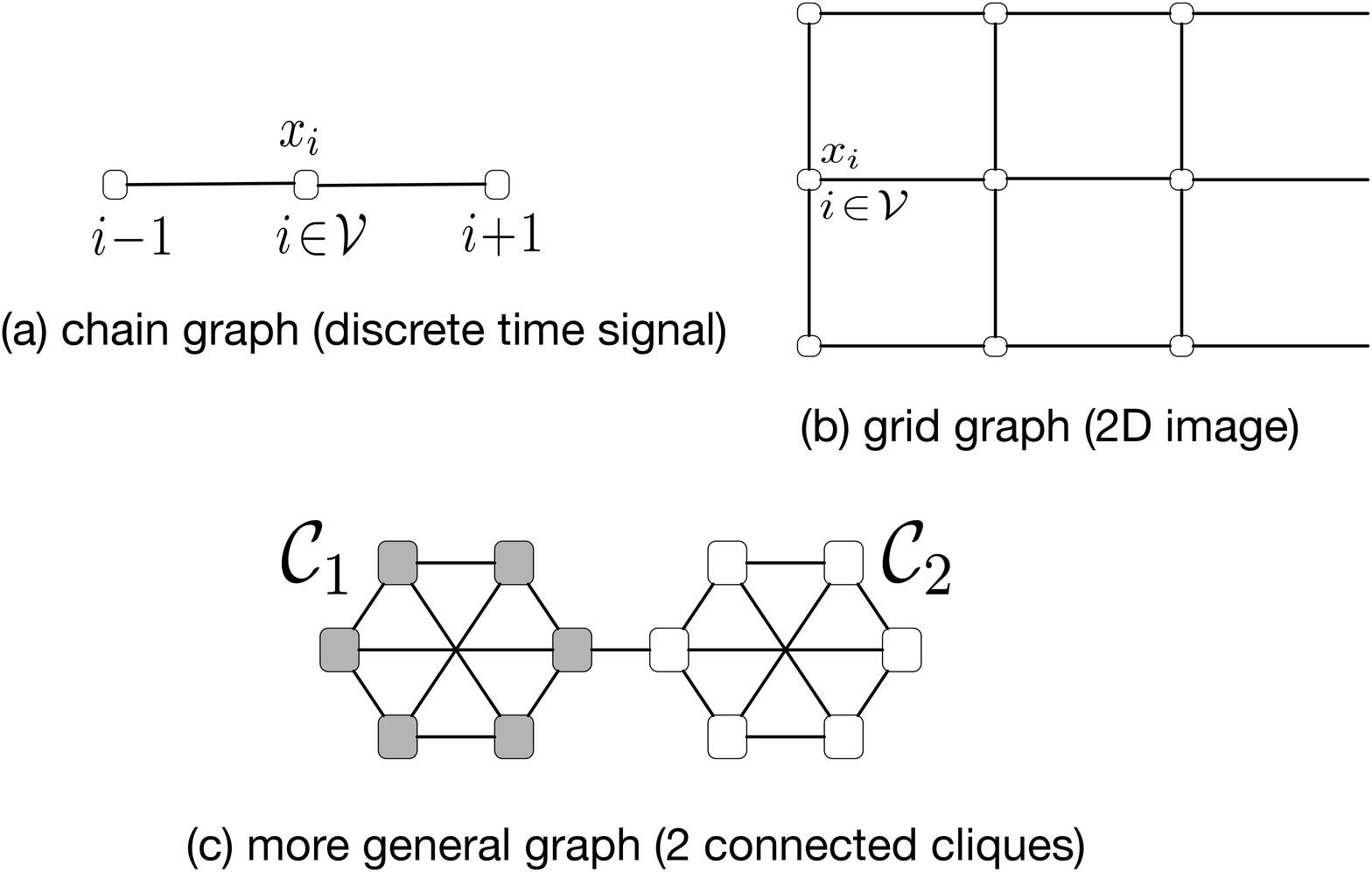}
\end{center}
\caption{\label{fig_graph_signals} Some examples of network-structured datasets with associated empirical graph being (a) a chain graph (discrete time signals), 
(b) grid graph (2D-images) and (c) a clustered graph (social networks).}
\end{figure}

In many applications we can associate each data point $i \in \nodes$ with a label $x_{i}$, e.g., in a social network application the label $x_{i}$ might encode the group membership 
of the member $i \in \nodes$ in a social network $\graph$. We interpret the labels $x_{i}$ as values of a graph signal $\xsig$ defined over the empirical graph $\graph$. 
Formally, a graph signal $\xsig \in \mathbb{R}^{\nodes}$ defined over the graph $\graph$ maps each node $i \in \nodes$ 
to the graph signal value $x[i] \in \mathbb{R}$. Since acquiring labels is often costly and error-prone, we typically have access to a few 
noisy labels $\tilde{x}_{i}$ for the data points $i \in \samplingset \subseteq \nodes$ within a (small) subset $\samplingset \subseteq \nodes$ of nodes in the empirical graph. 
Thus, we are interested in recovering the entire graph signal $\xsig$ from knowledge of its values $x[i] = \tilde{x}_{i}$ on 
a small subset $\samplingset \subseteq \nodes$ of labeled nodes $i \in \samplingset$. The signal recovery 
will be based on a clustering assumption \cite{SemiSupervisedBook}.

\noindent
{\bf Clustering Assumption (informal).}
{\it Consider a graph signal $\vx \in \mathbb{R}^{|\nodes|}$ whose signal values are the (mostly unknown) labels $\tilde{x}_{i}$ of the 
data points $z_{i} \in \dataset$. The signal values $x[i]$, $x[j]$ at nodes $i,j \in \nodes$ within a well-connected subset (cluster) of 
nodes in the empirical graph are similar, i.e., $x[i] \approx x[j]$.} 
This assumption of clustered graph signals $\vx$ can be made precise by requiring a small total variation (TV)
\begin{equation} 
\label{equ_def_TV}
\| \vx \|_{\rm TV} \defeq \sum_{\{i,j\} \in \edges} W_{i,j}  | x[j]\!-\!x[i]|.
\end{equation} 
The incidence matrix $\mD$ (cf.\ \eqref{equ_def_incidence_mtx}) allows to represent the TV of a graph signal conveniently as   
\begin{equation}
\label{equ_repr_ell_1_TV}
\| \vx \|_{\rm TV}  = \| \mathbf{D} \vx \|_{1}. 
\end{equation} 
We note that related but different measures for the total variation of a graph signal have been proposed previously (see, e.g., \cite{Chen2015,MouraDSPgraphs}). 
The definition \eqref{equ_def_TV} is appealing for several reasons. First, it conforms with the class of piece-wise constant or clustered graph 
signals which has proven useful in several applications including meteorology and binary classification \cite{LogisticNLasso,ChenVarma2016}. 
Second, as we demonstrate in what follows, the definition \eqref{equ_def_TV} allows to derive semi-supervised learning methods which can be implemented by efficient massing 
passing over the underlying empirical graph and thus ensure scalability of the resulting algorithm to large-scale (big) data.

A sensible strategy for recovering a graph signal with small TV is via minimizing the TV $\| \tilde{\vx} \|_{\rm TV}$ 
while requiring consistency with the observed noisy labels $\{ \tilde{x}_{i} \}_{i \in \samplingset}$, i.e., 
\begin{align} 
\primslp & \in \argmin_{\vx \in \graphsigs} \sum_{\{i,j\} \in \edges} W_{i,j}  | x[j]\!-\!x[i]|
\quad \mbox{s.t.} \quad x[i]\!=\!\tilde{x}_{i}  \mbox{ for all } i \in \samplingset \nonumber \\
& \stackrel{\eqref{equ_repr_ell_1_TV}}{=} 
\argmin_{\vx \in \graphsigs} \| \mD \vx \|_{1} \quad \mbox{s.t.} \quad  x[i]\!=\!\tilde{x}_{i} \mbox{ for all } i \in \samplingset.  \label{equ_min_constr}
\end{align}
The objective function of the optimization problem \eqref{equ_min_constr} is the seminorm $\|\vx\|_{\rm TV}$, which is a convex function.
\footnote{The seminorm $\|\vx\|_{\rm TV}$ is convex since it is homogeneous ($\| \alpha\vx\|_{\rm TV}\!=\!|\alpha|\|\vx\|_{\rm TV}$ 
for $\alpha \in \mathbb{R}$) and satisfies the triangle inequality ($\|\vx\!+\!\vy\|_{\rm TV} \!\leq\! \|\vx\|_{\rm TV}\!+\!\|\vy\|_{\rm TV}$).
These two properties imply convexity \cite[Section 3.1.5]{BoydConvexBook}.}  Since moreover the constraints in \eqref{equ_min_constr} 
are linear, the optimization problem \eqref{equ_min_constr} is a convex optimization problem \cite{BoydConvexBook}. Rather trivially, 
the problem \eqref{equ_min_constr} is equivalent to 
\begin{align} 
\primslp & = \argmin_{\vx \in \mathcal{Q}} \| \mD \vx \|_{1}.  \label{equ_min_constr_equiv_Q}
\end{align}
Here, we used the constraint set $\mathcal{Q} = \{ \vx : x[i] = \tilde{x}_{i} \mbox{ for all } i \in \samplingset \}$ which collects all 
graph signals $\vx \in \graphsigs$ which match the observed labels $\tilde{x}_{i}$ for the nodes of the sampling set $\samplingset$. 

The usefulness of the learning problem \eqref{equ_min_constr} depends on two aspects: (i) the deviation of the solutions of \eqref{equ_min_constr} from 
the true underlying graph signal and (ii) the difficulty (complexity) of computing the solutions of \eqref{equ_min_constr}. The first aspect 
has been addressed in \cite{NNSPFrontiers} which presents precise conditions on the sampling set $\samplingset$ and topology of 
the empirical graph $\graph$ such that any solution of \eqref{equ_min_constr} is close to the true underlying graph signal if it is (approximately) piece-wise constant 
over well-connected subsets of nodes (clusters). The focus of this paper is the second aspect, i.e., the difficulty or complexity 
of computing approximate solutions of \eqref{equ_min_constr}.


In what follows we will apply an efficient primal-dual method to solving the convex optimization problem \eqref{equ_min_constr}. 
This primal-dual method is appealing since it provides a theoretical convergence guarantee and also allows for an efficient implementation 
as message passing over the underlying empirical graph (cf.\ Algorithm \ref{sparse_label_propagation_mp} below). 
We coin the resulting semi-supervised learning algorithm sparse label propagation (SLP) since it bears some conceptual 
similarity to the ordinary label propagation (LP) algorithm for semi-supervised learning over graph models. 
In particular, LP algorithms can be interpreted as message passing methods for solving a particular recovery (or, learning) problem \cite[Chap 11.3.4.]{SemiSupervisedBook}: 
\begin{equation}
\hat{\vx}_{\rm LP}  \in \argmin_{\vx \in \graphsigs} \sum_{\{i,j\} \in \edges} W_{i,j} (x[i]-x[j])^2  \quad \mbox{s.t.} \quad  x[i] \!=\!\tilde{x}_{i} \mbox{ for all } i \in \samplingset.  \label{equ_LP_problem}
\end{equation} 
The recovery problem \eqref{equ_LP_problem} amounts to minimizing the weighted sum of squares, 
while SLP \eqref{equ_min_constr} minimize a weighted sum of absolute values, of the signal 
differences $(x[i]-x[j])^2$ arising over the edges $\{i,j\} \in \edges$ in the empirical graph $\graph$. 
It turns out that using the absolute values of signal differences instead of their squares allows SLP 
methods to accurately learn graph signals $\xsig$ which vary abruptly over few edges, e.g., clustered graph signals considered in \cite{LogisticNLasso,ChenVarma2016}. 
In contrast, LP methods tends to smooth out such abrupt signal variations. 

The SLP problem \eqref{equ_min_constr} is also closely related to the recently proposed network Lasso \cite{NetworkLasso,WhenIsNLASSO}
\begin{equation}
\hat{\vx}_{\rm nLasso} \in  \argmin_{\vx \in \graphsigs} \sum_{i \in \samplingset}  (x[i]-\tilde{x}_{i})^2  + \lambda \| \vx \|_{\rm TV} \label{equ_nLasso}.
\end{equation} 
Indeed, according to Lagrangian duality \cite{BertsekasNonLinProgr,BoydConvexBook}, by choosing $\lambda$ in \eqref{equ_nLasso} suitably, 
the solutions of \eqref{equ_nLasso} coincide with those of \eqref{equ_min_constr}. The tuning parameter $\lambda$ trades small empirical label 
fitting error $\sum_{i \in \samplingset}  (x[i]\!-\!\tilde{x}_{i})^2$ against small total variation $\|\hat{\vx}_{\rm nLasso}\|_{\rm TV}$ of the learned 
graph signal $\hat{\vx}_{\rm nLasso}$. Choosing a large value of $\lambda$ enforces small total variation of the learned graph signal, while 
using a small value for $\lambda$ puts more emphasis on the empirical error.  In contrast to network Lasso \eqref{equ_nLasso}, which requires 
to choose the parameter $\lambda$ (e.g., using (cross-)validation \cite{AGentleIntroML,hastie01statisticallearning}), the SLP method \eqref{equ_min_constr} does not require any 
parameter tuning. 


\section{Sparse Label Propagation}
\label{sec_spl_Alg}

The recovery problem \eqref{equ_min_constr} is a convex optimization problem with a non-differentiable 
objective function, which precludes the use of standard gradient methods such as (accelerated) gradient descent. 
However, both the objective function and the constraint set of the optimization problem \eqref{equ_min_constr}  
have rather a simple structure individually. This suggests the use of efficient proximal methods \cite{ProximalMethods} for solving \eqref{equ_min_constr}. In particular, 
we apply a preconditioned variant of the primal-dual method introduced by \cite{pock_chambolle}
to solve \eqref{equ_min_constr}.  

In order to apply the primal-dual method of \cite{pock_chambolle}, we reformulate \eqref{equ_min_constr}
as an unconstrained problem (see \eqref{equ_min_constr_equiv_Q}) 
\begin{align}
\label{equ_min_constr_unconstr}
\primslp & \!\in\! \argmin_{\vx \in \mathbb{R}^{|\nodes|}} f(\vx) \defeq g(\mD \vx) + h(\vx) \mbox{, with } g(\vy) \defeq \| \vy\|_{1} \mbox{ and } 
h(\vx) \defeq  \begin{cases} \infty  \mbox{ if } \vx \notin \mathcal{Q} \\ 0 \mbox{ if } \vx \in \mathcal{Q}.\end{cases}
\end{align} 
The function $h(\vx)$ in \eqref{equ_min_constr_unconstr} is the indicator function (cf. \cite{RockafellarBook}) of the convex set $\mathcal{Q}$ and can 
be described also via its epigraph 
\begin{equation}
{\rm epi } \,h = \{ (\vx,t):  \vx \in \mathcal{Q}, t \geq 0 \} \subseteq \graphsigs \times \mathbb{R}. \nonumber
\end{equation}
It will be useful to define another optimisation problem which might be considered as a dual problem to \eqref{equ_min_constr_unconstr}, i.e., 
\begin{align} 
\label{equ_dual_SLP}
\dualslp & \!\in\! \argmaximum_{\vy \in \mathbb{R}^{|\edges|}} \tilde{f}(\vy) \defeq -h^{*}(-\mD^{T} \vy) - g^{*}(\vy).
\end{align} 
Note that the objective function $\tilde{f}(\vy)$ of the dual SLP problem \eqref{equ_dual_SLP} involves the convex conjugates $h^{*}(\vx)$ and 
$g^{*}(\vy)$ (cf. \eqref{equ_def_convex_conjugate}) of the convex functions $h(\vx)$ and $g(\vy)$ which define the primal SLP problem \eqref{equ_min_constr_unconstr}.  

By elementary convex analysis \cite{RockafellarBook}, the solutions $\primslp$ of \eqref{equ_min_constr_unconstr} are characterized by the zero-subgradient condition 
\begin{equation}
\label{equ_zero_subgradient}
\mathbf{0} \in \partial f(\primslp). 
\end{equation} 
A particular class of iterative methods for solving \eqref{equ_min_constr_unconstr}, referred to as proximal methods, is obtained via 
fixed-point iterations of some operator $\mathcal{P}: \mathbb{R}^{|\nodes|} \rightarrow \mathbb{R}^{|\nodes|}$ whose fixed-points 
are precisely the solutions $\hat{\vx}_{\rm SLP}$ of \eqref{equ_zero_subgradient}, i.e., 
\begin{equation}
\label{equ_zero_subgradient_equ}
\mathbf{0} \in \partial f(\hat{\vx}_{\rm SLP}) \mbox{ if and only if } \hat{\vx}_{\rm SLP} = \mathcal{P} \hat{\vx}_{\rm SLP}. 
\end{equation} 
In general, the operator $\mathcal{P}$ is not unique, i.e., there are different choices for $\mathcal{P}$ such that 
\eqref{equ_zero_subgradient_equ} is valid. These different choices for the operator $\mathcal{P}$ in \eqref{equ_zero_subgradient_equ} 
result in different proximal methods \cite{ProximalMethods}. 

One approach to constructing the operator $\mathcal{P}$ in \eqref{equ_zero_subgradient_equ} is based on convex duality 
\cite[Thm.\ 31.3]{RockafellarBook}, according to which a graph signal $\hat{\vx}_{\rm SLP} \in \graphsigs$ solves \eqref{equ_min_constr_unconstr} 
if and only if there exists a (dual) vector $\hat{\vy} \in \mathbb{R}^{|\edges|}$ such that 
\begin{equation}
\label{equ_two_coupled_conditions}
-(\mD^{T} \dualslp) \in \partial h(\primslp) \mbox{ , and } \mD \primslp \in \partial g^{*}(\dualslp) . 
\end{equation} 
The dual vector $\dualslp \in \mathbb{R}^{|\edges|}$ represents a signal defined over the edges $\edges$ in 
the empirical graph $\graph$, with the entry $\hat{y}_{\rm SLP}[e]$ being the signal value associated with the particular edge $e \in \edges$. 

Let us now rewrite the two coupled conditions in \eqref{equ_two_coupled_conditions} as 
\begin{equation}
\label{equ_manipulated_coupled_conditions}
\primslp - {\bm \Gamma} \mD^{T} \dualslp \in \primslp+ {\bm \Gamma} \partial h(\primslp) 
\mbox{ , and } 2 {\bm \Lambda}  \mD \primslp +\dualslp \in {\bm \Lambda} \partial g^{*}(\dualslp)+ {\bm \Lambda} \mD\primslp+\dualslp , 
\end{equation}
with the invertible diagonal matrices (cf.\ \eqref{equ_edge_set_support_weights} and \eqref{equ_def_neighborhood})
\begin{equation} 
\label{equ_def_scaling_matrices}
{\bf \Lambda} \defeq (1/2) {\rm diag} \{ \lambda_{\{i,j\}} = 1/W_{i,j} \}_{\{i,j\} \in \edges} \in \mathbb{R}^{|\edges| \times |\edges|} \mbox{ and } 
{\bf \Gamma} \defeq (1/2)  {\rm diag} \{ \gamma_{i} = 1/d_{i} \}_{i \in \nodes} \in \mathbb{R}^{|\nodes| \times |\nodes|}.
\end{equation}
The specific choice \eqref{equ_def_scaling_matrices} for the matrices ${\bf \Gamma}$ and ${\bf \Lambda}$ 
can be shown to satisfy \cite[Lemma 2]{PrecPockChambolle2011}
\begin{equation}
\label{equ_convergence_condition}
\| {\bf \Gamma}^{1/2} \mD^{T} {\bf \Lambda}^{1/2} \|_{2} \leq 1/2,
\end{equation}
which will turn out to be crucial for ensuring the convergence of the iterative algorithm we will propose for solving \eqref{equ_min_constr_unconstr}. 

It will be convenient to define the resolvent operator for the functions $g^{*}(\vy)$ and $h(\vx)$ (cf. \eqref{equ_min_constr_unconstr} and \eqref{equ_def_convex_conjugate}), 
\cite[Sec. 1.1.]{PrecPockChambolle2011}
\begin{align}
(\mathbf{I} + {\bf \Lambda} \partial g^{*})^{-1} (\vy) & \defeq \argmin\limits_{\vz \in \edgesigs} g^{*}(\vz) +  (1/2) (\vy\!-\!\vz)^{T} {\bm \Lambda}^{-1}(\vy\!-\!\vz)\mbox{, and }  \nonumber \\ 
(\mathbf{I} + {\bm \Gamma} \partial h)^{-1} (\vx) & \defeq \argmin\limits_{\vz \in \graphsigs} h(\vz) + (1/2) (\vx\!-\!\vz)^{T} {\bm \Gamma}^{-1}(\vx\!-\!\vz). \label{equ_iterations_number_112}
\end{align}
We can now rewrite the optimality condition \eqref{equ_manipulated_coupled_conditions} (for $\primslp$, $\dualslp$ to be primal and dual optimal) 
more compactly as  
\begin{align}
\label{equ_condition_fix_point}
\primslp &= (\mathbf{I} + {\bm \Gamma} \partial h)^{-1} (\primslp- {\bm \Gamma} \mD^{T} \dualslp)  \\ \nonumber
\dualslp - 2(\mathbf{I} + {\bf \Lambda}  \partial g^{*})^{-1}   {\bf \Lambda}  \mD \primslp& = (\mathbf{I} +{\bf \Lambda}  \partial g^{*})^{-1}(\dualslp-  {\bf \Lambda}\mD\primslp).
\end{align}  
The characterization \eqref{equ_condition_fix_point} of the solution $\primslp \in \graphsigs$ for the SLP problem 
\eqref{equ_min_constr} leads naturally to the following fixed-point iterations for finding $\primslp$ (cf.\ \cite{PrecPockChambolle2011}) 
\begin{align}
\label{equ_fixed_point_iterations}  
\hat{\vy}^{(k+1)} &\defeq (\mathbf{I} + {\bf \Lambda}  \partial g^{*})^{-1}(\hat{\vy}^{(k)} +  {\bf \Lambda}  \mD(2\hat{\vx}^{(k)}- \hat{\vx}^{(k-1)}))\nonumber \\  
\hat{\vx}^{(k+1)} &\defeq (\mathbf{I} + {\bm \Gamma} \partial h)^{-1} (\hat{\vx}^{(k)} - {\bm \Gamma} \mD^{T} \hat{\vy}^{(k+1)}).
\end{align}  
The fixed-point iterations \eqref{equ_fixed_point_iterations} are similar to those considered in \cite[Sec. 6.2.]{pock_chambolle} for grid graphs 
arising in image processing. In contrast, the iterations \eqref{equ_fixed_point_iterations} are formulated for an arbitrary graph (network) structure 
which is represented by the incidence matrix $\mD \in \mathbb{R}^{|\edges| \times |\nodes|}$. By evaluating the application of the resolvent 
operators (cf. \eqref{equ_iterations_number_112}), we obtain simple closed-form expressions (cf. \cite[Sec. 6.2.]{pock_chambolle})
for the updates in \eqref{equ_fixed_point_iterations} yielding, in turn, Algorithm \ref{alg_sparse_label_propagation_centralized}.


\begin{algorithm}[h]
\caption{Sparse Label Propagation}{}
\begin{algorithmic}[1]
\renewcommand{\algorithmicrequire}{\textbf{Input:}}
\renewcommand{\algorithmicensure}{\textbf{Output:}}
\Require  directed empirical graph $\overrightarrow{\graph}$ with incidence matrix $\mD\!\in\! \mathbb{R}^{\overrightarrow{\edges} \times \nodes}$ 
(cf.\ \eqref{equ_def_incidence_mtx}), sampling set $\samplingset$, initial labels $\{ \tilde{x}_{i} \}_{i \in \samplingset}$. 
\Statex\hspace{-6mm}{\bf Initialize:} $k\!\defeq\!0$, $\bar{\vx} = \hat{\vx}^{(-1)}=\hat{\vx}^{(0)}=\hat{\vy}^{(0)}\!\defeq\! \mathbf{0}$, 
$\gamma_{i} \defeq 1/(2d_{i})$, $\lambda_{\{i,j\}} = 1/(2W_{i,j})$. 
\Repeat
\vspace*{2mm}
\State  $\vx  \defeq 2 \hat{\vx}^{(k)} - \hat{\vx}^{(k-1)}$
\vspace*{2mm}
\State $\hat{\vy}^{(k+1)}  \defeq \hat{\vy}^{(k)} + {\bf \Lambda}  \mD  \vx$ with ${\bf \Lambda}={\rm diag} \{ \lambda_{\{i,j\}} \}_{\{i,j\} \in \edges}$
\vspace*{2mm}
\State $\hat{y}^{(k+1)}[e]  \defeq \hat{y}^{(k+1)}[e] / \max\{1, |\hat{y}^{(k+1)}[e]| \}$  for all edges  $e \in \overrightarrow{\edges}$
\vspace*{2mm}
\State $\hat{\vx}^{(k+1)}  \defeq \hat{\vx}^{(k)} - {\bm \Gamma} \mD^{T} \hat{\vy}^{(k+1)}$ with ${\bm \Gamma}={\rm diag} \{ \gamma_{i} \}_{i \in \nodes}$
\vspace*{2mm}
\State $\hat{x}^{(k+1)}[i] \defeq \tilde{x}_{i}$  for all sampled nodes $i \in \samplingset$
\vspace*{2mm}
\State $k \defeq k+1$ 
\vspace*{2mm}
\State $\bar{\vx}^{(k)} \defeq (1-1/k) \bar{\vx}^{(k-1)}  + (1/k) \hat{\vx}^{(k)} $
\vspace*{2mm}
\Until{stopping criterion is satisfied}
\vspace*{2mm}
\Ensure labels $\hat{x}_{\rm SLP}[i] \defeq \bar{x}^{(k)}[i]$ \mbox{ for all }$i \in \nodes$
\end{algorithmic}
\label{alg_sparse_label_propagation_centralized}
\end{algorithm}
Note that the Algorithm \ref{alg_sparse_label_propagation_centralized} does not directly output the 
iterate $\hat{\vx}^{(k)}$ but its running average $\bar{\vx}^{(k)}$. Computing the running average (see step 8 in Algorithm \ref{alg_sparse_label_propagation_centralized}) 
requires only little effort but allows for a simpler convergence analysis (see the proof of Theorem \ref{thm_conv_result} in the Appendix).

One of the appealing properties of Algorithm \ref{alg_sparse_label_propagation_centralized} is that it allows for a highly scalable implementation via 
message passing over the underlying empirical graph $\graph$. 
This message passing implementation, summarized in Algorithm \ref{sparse_label_propagation_mp}, is obtained by implementing the application of the graph 
incidence matrix $\mD$ and its transpose $\mD^{T}$ (cf. steps $2$ and $5$ of Algorithm \ref{alg_sparse_label_propagation_centralized}) by local updates of 
the labels $\hat{x}[i]$, i.e., updates which involve only the neighbourhoods $\mathcal{N}(i)$, $\mathcal{N}(j)$ of all edges $\{i,j\} \in \edges$ in the empirical graph $\graph$.  

Note that executing Algorithm \ref{sparse_label_propagation_mp} does not require to collect global knowledge about the entire empircal graph 
(such as the maximum node degree $d_{\rm max}$ \eqref{equ_def_max_node_degree}) at some central processing unit. Indeed, if we associate each node in the data 
graph with a computational unit, the execution of Algorithm \ref{sparse_label_propagation_mp} requires each node $i \in \nodes$ only to store the 
values $\{ \hat{y}[\{i,j\}], W_{i,j} \}_{j \in \mathcal{N}(i)}$ and $\hat{x}^{(k)}[i]$. Moreover, the number of arithmetic operations required at each node $i \in \nodes$ 
during each time step is proportional to the number of the neighbours $\mathcal{N}(i)$. 
These characteristics allow Algorithm \ref{sparse_label_propagation_mp} to scale to massive datasets (big data) if they can be represented using 
sparse networks having a small maximum degree $d_{\rm max}$ \eqref{equ_def_max_node_degree}). The datasets generated in many important applications 
have been found to be accurately represented by such sparse networks \cite{barabasi2016network}. 

\begin{algorithm}[h]
\caption{Sparse Label Propagation as Message Passing}{}
\begin{algorithmic}[1]
\renewcommand{\algorithmicrequire}{\textbf{Input:}}
\renewcommand{\algorithmicensure}{\textbf{Output:}}
\Require directed empirical graph $\overrightarrow{\graph}=(\nodes,\overrightarrow{\edges},\mW)$, sampling set $\samplingset$, noisy labels $\{ \tilde{x}_{i} \}_{i \in \samplingset}$. 
\vspace*{3mm}
\Statex\hspace{-6mm}{\bf Initialize:} $k\!\defeq\!0$, $\bar{\vx}=\hat{\vy}^{(0)}=\hat{\vx}^{(-1)}=\hat{\vx}^{(0)}\!\defeq\!\mathbf{0}$, 
$\gamma_{i} \defeq 1/(2d_{i})$, $\lambda_{\{i,j\}} = 1/(2W_{i,j})$. 
\vspace*{3mm}
\Repeat
\vspace*{4mm}
\State for all nodes $i \in \nodes$: $x[i]  \defeq 2 \hat{x}^{(k)}[i]  - \hat{x}^{(k-1)}[i]$    
\vspace*{4mm}
\State for all edges $e=(i,j)\!\in\!\overrightarrow{\edges}$: $\hat{y}^{(k+1)}[e]  \defeq \hat{y}^{(k)}[e] +  W_{i,j} \lambda_{\{i,j\}}   (x[e^{+}] - x[e^{-}])$
\vspace*{4mm}
\State for all edges $e \in \overrightarrow{\edges}$: $\hat{y}^{(k+1)}[e]  \defeq \hat{y}^{(k+1)}[e] / \max\{1, |\hat{y}^{(k+1)}[e]| \}$  
\vspace*{4mm}
\State for all nodes $i\!\in\!\nodes$: $\hat{x}^{(k+1)}[i]\!\defeq\!\hat{x}^{(k)}[i]\!-\!\gamma_{i} \bigg[ \sum\limits_{j \in \mathcal{N}_{+}(i)} \hspace*{-3mm}W_{i,j} \hat{y}^{(k+1)}[\{i,j\}]- \hspace*{-3mm}\sum\limits_{j \in \mathcal{N}_{-}(i)} \hspace*{-3mm}W_{i,j} \hat{y}^{(k+1)}[\{i,j\}] \bigg]$    \label{algostep1} 
\vspace*{4mm}
\State for all sampled nodes $i\!\in\!\samplingset$:  $\hat{x}^{(k+1)}[i] \defeq \tilde{x}_{i}$
\vspace*{4mm}
\State $k \defeq k+1$    
\vspace*{4mm}
\State for all nodes $i\!\in\!\nodes$: $\bar{x}[i] \defeq (1-1/k)\bar{x}[i] + (1/k) \hat{x}^{(k)}[i]$
\vspace*{4mm}
\Until{stopping criterion is satisfied}
\vspace*{2mm}
\Ensure labels $\hat{x}_{\rm SLP}[i] \defeq \hat{x}^{(k)}[i]$ \mbox{ for all }$i \in \nodes$
\end{algorithmic}
\label{sparse_label_propagation_mp}
\end{algorithm}

\section{Complexity of Sparse Label Propagation}
\label{sec_main_results}
There are various options for the stopping criterion in Algorithm \ref{alg_sparse_label_propagation_centralized}, 
e.g., using a fixed number of iterations or testing for sufficient decrease of the objective function (cf.\ \cite{becker2011nesta}). 
When using a fixed number of iterations, the following characterization of the convergence rate of 
Algorithm \ref{alg_sparse_label_propagation_centralized}, we need to have a precise characterization of how many 
iterations are required to guarantee a prescribed accuracy of the resulting estimate. 
Such a characterization is provided by the following result.   

%
\begin{theorem}
\label{thm_conv_result}
Consider the sequences $\hat{\vx}^{(k)}$ and $\hat{\vy}^{(k)}$ obtained from the update rule \eqref{equ_fixed_point_iterations} and starting from some 
arbitrary initalizations $\hat{\vx}^{(0)}$ and $\hat{\vy}^{(0)}$. The averages 
\begin{equation} 
\label{equ_def_averages}
\bar{\vx}^{(\maxiter)}= (1/\maxiter) \sum_{k=1}^{\maxiter} \hat{\vx}^{(k)} \mbox{, and } \bar{\vy}^{(\maxiter)}=(1/\maxiter)\sum_{k=1}^{\maxiter} \hat{\vy}^{(k)}
\end{equation}
obtained after $\maxiter$ iterations (for $k=0,\ldots,\maxiter-1$) of \eqref{equ_fixed_point_iterations}, satisfy 
\begin{equation} 
\label{equ_bound_convergence_SLP}
\| \bar{\vx}^{(\maxiter)} \|_{\rm TV} - \| \primslp \|_{\rm TV}  \leq \frac{1}{2 \maxiter} \big( \| \hat{\vx}^{(0)} - \primslp \|^{2}_{{\bm \Gamma}^{-1}} + \| \hat{\vy}^{(0)} - \tilde{\vy}^{(\maxiter)} \|^{2}_{{\bm \Lambda}^{-1}}  \big)  
\end{equation}
with $\tilde{\vy}^{(\maxiter)} = {\rm sign } \{\mD \bar{\vx}^{(\maxiter)}  \}$. Moreover, the sequence $\| \hat{\vy}^{(0)} - \tilde{\vy}^{(\maxiter)} \|_{{\bm \Lambda}^{-1}}$, for $\maxiter=1,\ldots$, is bounded. 
\end{theorem}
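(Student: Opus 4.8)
The plan is to recognize the update rule \eqref{equ_fixed_point_iterations} as the preconditioned primal--dual iteration of \cite{PrecPockChambolle2011} applied to the saddle-point reformulation of \eqref{equ_min_constr_unconstr}, and then to specialize its ergodic convergence bound using the $\ell_{1}/\ell_{\infty}$ structure of $g$ and $h$. I would introduce the Lagrangian
\[
L(\vx,\vy) \defeq h(\vx) + \vy^{T}\mD\vx - g^{*}(\vy),
\]
which is convex in $\vx$, concave in $\vy$, and for which $\primslp$ together with any dual optimum is a saddle point by \eqref{equ_two_coupled_conditions}. The workhorse is a \emph{partial primal--dual gap} bound for the averaged iterates: for every fixed $\vx \in \graphsigs$ and $\vy \in \edgesigs$,
\[
L(\bar{\vx}^{(\maxiter)},\vy) - L(\vx,\bar{\vy}^{(\maxiter)}) \;\leq\; \frac{1}{2\maxiter}\Big( \| \hat{\vx}^{(0)}-\vx \|^{2}_{{\bm \Gamma}^{-1}} + \| \hat{\vy}^{(0)}-\vy \|^{2}_{{\bm \Lambda}^{-1}} \Big).
\]
This is essentially \cite[Thm.~1]{PrecPockChambolle2011}; I would reprove it in the present notation, starting from the optimality conditions of the two resolvent subproblems \eqref{equ_iterations_number_112}, which read, for all $\vx,\vy$ and all $k$,
\[
g^{*}(\vy) \geq g^{*}(\hat{\vy}^{(k+1)}) + (\vy-\hat{\vy}^{(k+1)})^{T}\big[ {\bm \Lambda}^{-1}(\hat{\vy}^{(k)}-\hat{\vy}^{(k+1)}) + \mD(2\hat{\vx}^{(k)}-\hat{\vx}^{(k-1)}) \big],
\]
\[
h(\vx) \geq h(\hat{\vx}^{(k+1)}) + (\vx-\hat{\vx}^{(k+1)})^{T}\big[ {\bm \Gamma}^{-1}(\hat{\vx}^{(k)}-\hat{\vx}^{(k+1)}) - \mD^{T}\hat{\vy}^{(k+1)} \big].
\]

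Adding these two families of inequalities, summing over $k=0,\ldots,\maxiter-1$ (with $\hat{\vx}^{(-1)}\defeq\hat{\vx}^{(0)}$) and regrouping, the squared-distance terms telescope to $\tfrac12\big(\| \hat{\vx}^{(0)}-\vx \|^{2}_{{\bm \Gamma}^{-1}} + \| \hat{\vy}^{(0)}-\vy \|^{2}_{{\bm \Lambda}^{-1}}\big)$ minus nonnegative final squares and minus $\tfrac12\sum_{k}\big(\|\hat{\vx}^{(k+1)}-\hat{\vx}^{(k)}\|^{2}_{{\bm \Gamma}^{-1}}+\|\hat{\vy}^{(k+1)}-\hat{\vy}^{(k)}\|^{2}_{{\bm \Lambda}^{-1}}\big)$, while the bilinear terms created by the over-relaxation $2\hat{\vx}^{(k)}-\hat{\vx}^{(k-1)}$ reorganize into a boundary residual $-(\hat{\vx}^{(\maxiter)}-\hat{\vx}^{(\maxiter-1)})^{T}\mD^{T}(\hat{\vy}^{(\maxiter)}-\vy)$ plus a sum of consecutive-difference cross terms $(\hat{\vx}^{(k)}-\hat{\vx}^{(k-1)})^{T}\mD^{T}(\hat{\vy}^{(k+1)}-\hat{\vy}^{(k)})$. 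By Cauchy--Schwarz, Young's inequality and the step-size condition \eqref{equ_convergence_condition} (equivalently \cite[Lemma~2]{PrecPockChambolle2011}), these cross terms are absorbed into the negative squared-difference terms, leaving $\sum_{k=1}^{\maxiter}\big[L(\hat{\vx}^{(k)},\vy)-L(\vx,\hat{\vy}^{(k)})\big] \leq \tfrac12\big(\| \hat{\vx}^{(0)}-\vx \|^{2}_{{\bm \Gamma}^{-1}} + \| \hat{\vy}^{(0)}-\vy \|^{2}_{{\bm \Lambda}^{-1}}\big)$. Dividing by $\maxiter$ and applying Jensen's inequality -- using convexity of $L(\cdot,\vy)$ and concavity of $L(\vx,\cdot)$ with the averages \eqref{equ_def_averages} -- gives the displayed gap bound.

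It then remains to specialize. Since $g(\vy)=\|\vy\|_{1}$, the conjugate $g^{*}$ is the indicator of the unit $\ell_{\infty}$-ball, and $h$ is the indicator of the affine set $\mathcal{Q}$; for $k\geq 1$ the iterates $\hat{\vy}^{(k)}$ and $\hat{\vx}^{(k)}$ are images of the resolvents of $g^{*}$ and $h$ (a projection onto the $\ell_{\infty}$-ball, respectively onto $\mathcal{Q}$), so by convexity $\|\bar{\vy}^{(\maxiter)}\|_{\infty}\leq 1$ and $\bar{\vx}^{(\maxiter)}\in\mathcal{Q}$, whence $g^{*}(\bar{\vy}^{(\maxiter)})=0$ and $h(\bar{\vx}^{(\maxiter)})=0$. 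I would then plug $\vx=\primslp$ (so $h(\primslp)=0$) and $\vy=\tilde{\vy}^{(\maxiter)}={\rm sign}\{\mD\bar{\vx}^{(\maxiter)}\}$ (so $\|\tilde{\vy}^{(\maxiter)}\|_{\infty}\leq 1$ and $g^{*}(\tilde{\vy}^{(\maxiter)})=0$) into the gap bound. Its left-hand side then equals $(\tilde{\vy}^{(\maxiter)})^{T}\mD\bar{\vx}^{(\maxiter)}-(\bar{\vy}^{(\maxiter)})^{T}\mD\primslp$; since $(\tilde{\vy}^{(\maxiter)})^{T}\mD\bar{\vx}^{(\maxiter)} = \|\mD\bar{\vx}^{(\maxiter)}\|_{1} = \|\bar{\vx}^{(\maxiter)}\|_{\rm TV}$ (using ${\rm sign}(0)=0$) and $(\bar{\vy}^{(\maxiter)})^{T}\mD\primslp \leq \|\bar{\vy}^{(\maxiter)}\|_{\infty}\|\mD\primslp\|_{1} \leq \|\primslp\|_{\rm TV}$, the left-hand side is at least $\|\bar{\vx}^{(\maxiter)}\|_{\rm TV}-\|\primslp\|_{\rm TV}$, which yields \eqref{equ_bound_convergence_SLP}. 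For the last claim, every entry of $\tilde{\vy}^{(\maxiter)}$ lies in $\{-1,0,1\}$ regardless of $\maxiter$, so the triangle inequality gives $\|\hat{\vy}^{(0)}-\tilde{\vy}^{(\maxiter)}\|_{{\bm \Lambda}^{-1}} \leq \|\hat{\vy}^{(0)}\|_{{\bm \Lambda}^{-1}} + \big(\sum_{\{i,j\}\in\edges}2W_{i,j}\big)^{1/2}$, a bound independent of $\maxiter$.

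The step I expect to be the main obstacle is the telescoping that yields the generic gap bound: one must track the extrapolation term $2\hat{\vx}^{(k)}-\hat{\vx}^{(k-1)}$ carefully so that, after summation, the bilinear contributions reorganize into exactly the boundary residual plus the consecutive-difference sum, and then verify that \eqref{equ_convergence_condition} (with the constant $1/2$, a consequence of the factor $1/2$ in \eqref{equ_def_scaling_matrices}) really does make those terms dominated by the negative quadratic remainders. Once the gap bound is in hand, the specialization is routine bookkeeping with $\ell_{1}/\ell_{\infty}$ duality and the indicator structure of $g^{*}$ and $h$.
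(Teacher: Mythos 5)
Your proposal is correct and follows essentially the same route as the paper: the same saddle-point reformulation, the same resolvent-inequality/telescoping derivation of the ergodic partial primal--dual gap bound under the step-size condition \eqref{equ_convergence_condition}, and the same choice of test points $\vx=\primslp$, $\vy=\tilde{\vy}^{(\maxiter)}={\rm sign}\{\mD\bar{\vx}^{(\maxiter)}\}$. The only (harmless) deviations are in the final bookkeeping: you lower-bound $-\mathcal{L}(\primslp,\bar{\vy}^{(\maxiter)})$ by $-\|\primslp\|_{\rm TV}$ via $\ell_{1}/\ell_{\infty}$ duality where the paper invokes the saddle-point inequality, and your boundedness argument for $\|\hat{\vy}^{(0)}-\tilde{\vy}^{(\maxiter)}\|_{{\bm \Lambda}^{-1}}$ (the entries of the sign vector lie in $\{-1,0,1\}$) is more direct than the paper's detour through boundedness of the iterates.
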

\begin{proof}
see Appendix. 
\end{proof} 
According to \eqref{equ_bound_convergence_SLP}, the sub-optimality in terms of objective value function incurred by the 
output of Algorithm \ref{alg_sparse_label_propagation_centralized} after $K$ iterations is bounded as 
\begin{equation}
\label{equ_upper_bound_constant} 
\| \bar{\vx}^{(\maxiter)} \|_{\rm TV} - \| \primslp \|_{\rm TV}  \leq c/K,
\end{equation} 
where the constant $c$ does not depend on $K$ but might depend on the empirical graph via its weighted incidence matrix 
$\mD$ (cf. \eqref{equ_def_incidence_mtx}) as well as on the initial labels $\tilde{x}_{i}$. 
The bound \eqref{equ_upper_bound_constant} suggests that in order to ensure reducing the sub-optimality by a factor of two, we need 
to run Algorithm \ref{alg_sparse_label_propagation_centralized} for twice as many iterations. 

Let us now show that the bound \eqref{equ_upper_bound_constant} on the convergence speed is essentially tight. What is more, the bound 
cannot be improved substantially by any learning method, such as SLP \eqref{equ_min_constr_unconstr} or network Lasso \eqref{equ_nLasso}, 
which is implemented as message passing over the underlying empirical graph $\graph$. 
To this end we consider a dataset whose empirical graph is a weighted chain graph (see Figure \ref{fig_prove_chain})
with nodes $\nodes = \{1,\ldots,N\}$ which are connected by $N-1$ edges $\edges = \{ \{i,i+1\} \}_{i=1,\ldots,N-1}$. 
The weights of the edges are $W_{i,i+1} = 1/i$. The labels of the data points $\nodes$ induce a graph signal $\vx$ defined over 
$\graph$ with $x[i] = 1$ for all nodes $i = \{1,\ldots,N-1\}$ and $x[N]=0$. 
We observe the graph signal noise free on the sampling set $\samplingset = \{1,N\}$, resulting in the observations 
$\tilde{x}_{1} = 1$ and $\tilde{x}_{N} = 0$. According to \cite[Theorem 3]{NNSPFrontiers}, the solution 
$\primslp$ of the SLP problem \eqref{equ_min_constr_unconstr} is unique and coincides with the true underlying graph signal $\vx$. 
Thus, the optimal objective function value is $\| \primslp \|_{\rm TV} = \| \vx \|_{\rm TV} = 1/(N-1)$. On the other hand, 
the output $\bar{\vx}^{(\maxiter)}$ of Algorithm \ref{alg_sparse_label_propagation_centralized} after $\maxiter$ iterations satisfies 
$\bar{x}^{(\maxiter)}[1] = 1$ and $\bar{x}^{(\maxiter)}[i]=0$ for all nodes $i \in \{\maxiter+1,\ldots,N\}$. Thus, 
\begin{equation} 
\| \bar{\vx}^{(\maxiter)} \|_{\rm TV} \geq 1/W_{\maxiter,\maxiter+1} = 1/\maxiter,
\end{equation} 
implying, in turn, 
\begin{equation}
\label{equ_lower_bound_chain}
\| \bar{\vx}^{(\maxiter)} \|_{\rm TV}  - \| \primslp \|_{\rm TV}  \geq  1/\maxiter - 1/N. 
\end{equation} 
For the regime of $\maxiter/N \ll 1$ which is reasonable for big data applications where the number of iterations $\maxiter$ computed 
in Algorithm \ref{alg_sparse_label_propagation_centralized} is small compared to the size $N$ of the dataset, the dependency of the 
lower bound \eqref{equ_lower_bound_chain} on the number of iterations is essentially $\propto 1/\maxiter$ and therefore matches the 
upper bound \eqref{equ_upper_bound_constant}. This example indicates that, for certain structure of edge weights, chain graphs 
are among the most challenging topologies regarding the convergence speed of SLP.
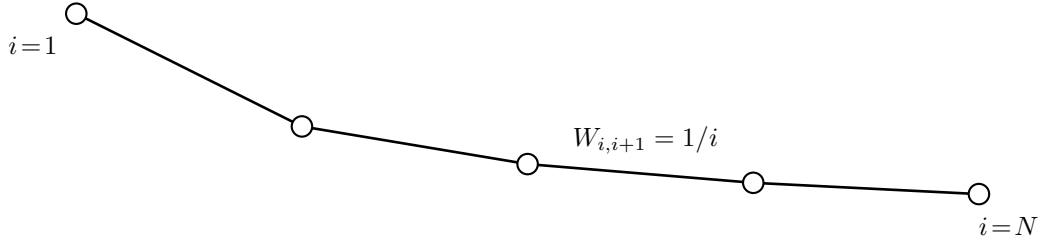
\begin{figure}
\begin{pspicture}(-1.5,-1)(5.5,2.5)
\psset{unit=3cm}
\psline[linewidth=1pt]{}(0,1)(1,0.5)
\pscircle[fillstyle=solid](0,1){0.05}
\psline[linewidth=1pt]{}(1,0.5)(2,0.333)
\pscircle[fillstyle=solid](1,0.5){0.05}
\psline[linewidth=1pt]{}(2,0.333)(3,0.25)
\pscircle[fillstyle=solid](2,0.333){0.05}
\psline[linewidth=1pt]{}(3,0.25)(4,0.2)
\pscircle[fillstyle=solid](4,0.2){0.05}
\pscircle[fillstyle=solid](3,0.25){0.05}
\rput[tl](2.2,0.5){$W_{i,i+1} = 1/i$}
\rput[tl](-0.3,0.9){$i\!=\!1$}
\rput[tl](4,0.1){$i\!=\!N$}
\end{pspicture}
\caption{\label{fig_prove_chain} The empirical graph $\graph$ is a chain graph with edge weights $W_{i,i+1} =1/i$. We aim at recovering the graph a graph signal 
from the observations $\tilde{x}_{1}= 1$ and $\tilde{x}_{N}=0$ using Algorithm \ref{sparse_label_propagation_mp}.}
\end{figure}

\section{Conclusions}
\label{sec5_conclusion}
We have studied the intrinsic complexity of sparse label propagation by deriving an upper bound 
on the number of iterations required to achieve a given accuracy. This upper bound is essentially tight 
as it cannot be improved substantially for the particular class of graph signals defined over a chain graph (such as time series). 

\appendix
\section*{Proof of Theorem \ref{thm_conv_result}}
\label{app_proof_Thm1}
Our proof closely follows the argument used for deriving \cite[Thm. 1]{pock_chambolle}. 
Let us start with rewriting the objective function of the SLP problem \eqref{equ_min_constr_unconstr} using convex conjugate functions (cf.\ \eqref{equ_inv_conjugate}) as 
\begin{equation}
\label{equ_rewrite_cost_fun_SLP}
f(\vx) =  \sup_{\vy \in \mathbb{R}^{|\edges|}} \mathcal{L}(\vx,\vy) \mbox{ with } \mathcal{L}(\vx,\vy) \defeq \vy^{T} \mD \vx+ h(\vx)- g^{*}(\vy),
\end{equation} 
so that we can reformulate the SLP problem \eqref{equ_min_constr_unconstr} equivalently as 
\begin{equation}
\label{equ_slp_primal_formulation}
\primslp \in \argmin_{\vx \in \graphsigs} \sup_{\vy \in \mathbb{R}^{|\edges|}} \mathcal{L}(\vx,\vy).
\end{equation} 
The SLP dual problem \eqref{equ_dual_SLP} is obtained by swapping the order of minimization and maximization (taking supremum): 
\begin{equation}
\label{equ_slp_dual_formulation}
\dualslp \in \argmaximum_{\vy \in \mathbb{R}^{|\edges|}} \inf_{\vx \in \graphsigs} \mathcal{L}(\vx,\vy) = \argmaximum_{\vy \in \mathbb{R}^{|\edges|}}  - h^{*}(-\mathbf{D}^{T} \vy)- g^{*}(\vy). 
\end{equation} 
According to \cite[Corollary 31.2.1]{RockafellarBook}, the optimal objective values of the primal \eqref{equ_slp_primal_formulation} and 
dual problem \eqref{equ_slp_dual_formulation} coincide, i.e., 
\begin{equation}
\label{equ_minmax_equals_maxmin}
\inf_{\vx \in \graphsigs} \sup_{\vy \in \mathbb{R}^{|\edges|}} \mathcal{L}(\vx,\vy)  =  \sup_{\vy \in \mathbb{R}^{|\edges|}}  \inf_{\vx \in \graphsigs} \mathcal{L}(\vx,\vy),
\end{equation}
and, in turn, 
\begin{equation}
\label{equ_objective_TV_minmax}
\| \primslp \|_{\rm TV} \stackrel{\primslp \in \mathcal{Q}}{=} f(\hat{\vx}_{\rm SLP}) 
\stackrel{\eqref{equ_slp_primal_formulation}}{=} \inf_{\vx \in \graphsigs} \sup_{\vy \in \mathbb{R}^{|\edges|}} \mathcal{L}(\vx,\vy)  =  \sup_{\vy \in \mathbb{R}^{|\edges|}}  \inf_{\vx \in \graphsigs} \mathcal{L}(\vx,\vy).
\end{equation}
Therefore, by combining \eqref{equ_minmax_equals_maxmin} with \cite[Lemma 36.2]{RockafellarBook}, we have that any pair $\hat{\vx}_{\rm SLP}, \hat{\vy}_{\rm SLP}$ 
consisting of a primal and dual optimal point forms a saddle point of $\mathcal{L}$, i.e., 
\begin{equation}
\label{equ_primal_dual_SLP_saddle_point}
\mathcal{L}(\primslp,\vy) \leq \mathcal{L}(\primslp,\dualslp)  \leq \mathcal{L}(\vx,\dualslp) \mbox{, for any } \vx \in \graphsigs, \vy \in  \mathbb{R}^{|\edges|},
\end{equation}
and moreover 
\begin{equation} 
\label{equ_primal_dual_SLP_saddle_point}
 \mathcal{L}(\primslp,\dualslp)  = \inf_{\vx \in \graphsigs}  \sup_{\vy \in \mathbb{R}^{|\edges|}} \mathcal{L}(\vx,\vy) =   \sup_{\vy \in \mathbb{R}^{|\edges|}}  \inf_{\vx \in \graphsigs} \mathcal{L}(\vx,\vy)   \stackrel{\eqref{equ_objective_TV_minmax}}{=} f(\primslp). 
\end{equation}

Let us analyze the effect of a single update \eqref{equ_fixed_point_iterations}. Using \eqref{equ_iterations_number_112}, 
we have for any $\vx \in \graphsigs$ and $\vy \in \mathbb{R}^{|\edges|}$, 
\begin{align}
g^{*}(\vy) & \geq g^{*}\big(\hat{\vy}^{(k+1)}\big) + \big(\hat{\vy}^{(k)} - \hat{\vy}^{(k+1)}\big)^{T}  {\bf \Lambda}^{-1} (\vy - \hat{\vy}^{(k+1)})+  \bar{\vx}^{T} \mD^{T} \big(\vy-\hat{\vy}^{(k+1)}\big)  \nonumber\\[3mm]
h(\vx) & \geq h\big(\hat{\vx}^{(k+1)}\big) +  \big(\hat{\vx}^{(k)} - \hat{\vx}^{(k+1)}\big)^{T}  {\bf \Gamma}^{-1} (\vx - \hat{\vx}^{(k+1)}) - \big( \hat{\vy}^{(k)} \big)^{T} \mD (\vx - \hat{\vx}^{(k+1)}) \label{equ_two_inequalities}
\end{align} 
with the shorthand 
\begin{equation}
\label{equ_def_bar_x} 
\bar{\vx} \defeq \mD (2\hat{\vx}^{(k)}- \hat{\vx}^{(k-1)}).
\end{equation} 
Summing the inequalities in \eqref{equ_two_inequalities} and collecting terms, 
\begin{align}
\label{equ_bound_proof_111}
\big\| \vy\!-\!\hat{\vy}^{(k)} \big\|^{2}_{{\bf \Lambda}^{-1}} + \big\| \vx\!-\!\hat{\vx}^{(k)}\big\|^{2}_{{\bf \Gamma}^{-1}} &  \geq   2 \mathcal{L}\big( \hat{\vx}^{(k+1)}, \vy \big)\!-\!2 \mathcal{L} \big( \vx, \hat{\vy}^{(k+1)} \big)   \nonumber \\[3mm]
& \hspace*{-20mm} +  \big\|\vy\!-\!\hat{\vy}^{(k+1)} \big\|^{2}_{{\bf \Lambda}^{-1}}
\!+\!\big\|\vx\!-\!\hat{\vx}^{(k+1)}\big\|^{2}_{{\bf \Gamma}^{-1}}\!+\!\big\|\hat{\vy}^{(k)}\!-\!\hat{\vy}^{(k+1)} \big\|^{2}_{{\bf \Lambda}^{-1}}\!+\!\big\|\hat{\vx}^{(k)}\!-\!\hat{\vx}^{(k+1)}\big\|^{2}_{{\bf \Gamma}^{-1}} \nonumber \\[3mm]
& \hspace*{-20mm}+ 2(\hat{\vx}^{(k+1)}\!-\!\bar{\vx})^{T} \mD^{T}\big( \hat{\vy}^{(k+1)}\!-\!\vy \big). 
\end{align} 
We can develop the final summand on the right hand side of \eqref{equ_bound_proof_111} as 
\begin{align}
\label{equ_bound_112}
(\hat{\vx}^{(k+1)}\!-\!\bar{\vx})^{T} \mD^{T}\big( \hat{\vy}^{(k+1)}\!-\!\vy \big) & \stackrel{\eqref{equ_def_bar_x}}{=} (\hat{\vx}^{(k+1)}\!-\!(2 \hat{\vx}^{(k)} - \hat{\vx}^{(k-1)}))^{T} \mD^{T}\big( \hat{\vy}^{(k+1)}\!-\!\vy \big)  \nonumber \\[3mm]
& = (\hat{\vx}^{(k+1)}\!-\!\hat{\vx}^{(k)})^{T} \mD^{T}\big( \hat{\vy}^{(k+1)}\!-\!\vy \big)  -  (\hat{\vx}^{(k)}\!-\!\hat{\vx}^{(k-1)})^{T} \mD^{T}\big( \hat{\vy}^{(k)}\!-\!\vy \big)  \nonumber \\[3mm]
& -  (\hat{\vx}^{(k)}\!-\!\hat{\vx}^{(k-1)})^{T} \mD^{T}\big( \hat{\vy}^{(k+1)}\!-\!\hat{\vy}^{(k)} \big). 
\end{align} 
The last term in \eqref{equ_bound_112} can be further developed, with the shorthand  $\kappa \defeq \big\|  {\bf \Gamma}^{1/2} \mathbf{D}^{T} {\bf \Lambda}^{1/2} \big\|_{2}$, as
\begin{align}
\label{equ_bound_11222}
 (\hat{\vx}^{(k)}\!-\!\hat{\vx}^{(k-1)})^{T} \mD^{T}\big( \hat{\vy}^{(k+1)}\!-\!\hat{\vy}^{(k)} \big) & \stackrel{(a)}{\leq}  \kappa \big\| \hat{\vx}^{(k)}\!-\!\hat{\vx}^{(k-1)} \big\|_{{\bf \Gamma}^{-1}}  
 \big\| \hat{\vy}^{(k+1)}\!-\!\hat{\vy}^{(k)} \big\|_{{\bf \Lambda}^{-1}}   \nonumber \\ 
 & \stackrel{(b)}{\leq}  (1/2)\kappa\big( \big\| \hat{\vx}^{(k)}\!-\!\hat{\vx}^{(k-1)} \big\|^{2}_{{\bf \Gamma}^{-1}}  + 
 \big\| \hat{\vy}^{(k+1)}\!-\!\hat{\vy}^{(k)} \big\|^{2}_{{\bf \Lambda}^{-1}} \big) 
\end{align} 
where step $(a)$ follows from the Cauchy-Schwarz inequality and step $(b)$ is due to the elementary inequality $2ab \leq a^2 + b^2$ valid for any $a,b \in \mathbb{R}$. 
Combining \eqref{equ_bound_11222} and \eqref{equ_bound_112} with \eqref{equ_bound_proof_111}, 
\begin{align} 
\label{equ_bound_1333}
\big\| \vy\!-\!\hat{\vy}^{(k)} \big\|^{2}_{{\bf \Lambda}^{-1}} + \big\| \vx\!-\!\hat{\vx}^{(k)}\big\|^{2}_{{\bf \Gamma}^{-1}} &  \geq   2 \mathcal{L}\big( \hat{\vx}^{(k+1)}, \vy \big)\!-\!2 \mathcal{L} \big( \vx, \hat{\vy}^{(k+1)} \big)   \nonumber \\[3mm]
& \hspace*{-20mm} +  \big\|\vy\!-\!\hat{\vy}^{(k+1)} \big\|^{2}_{{\bf \Lambda}^{-1}}
\!+\!\big\|\vx\!-\!\hat{\vx}^{(k+1)}\big\|^{2}_{{\bf \Gamma}^{-1}}\!+\!(1\!-\!\kappa)  \big\|\hat{\vy}^{(k)}\!-\!\hat{\vy}^{(k+1)} \big\|^{2}_{{\bf \Lambda}^{-1}}\!+\!\big\|\hat{\vx}^{(k)}\!-\!\hat{\vx}^{(k+1)}\big\|^{2}_{{\bf \Gamma}^{-1}} \nonumber \\[3mm]
& \hspace*{-20mm} -\!\kappa \big\|\hat{\vx}^{(k)}\!-\!\hat{\vx}^{(k-1)}\big\|^{2}_{{\bf \Gamma}^{-1}} + 2 (\hat{\vx}^{(k+1)}\!-\!\hat{\vx}^{(k)})^{T} \mD^{T}\big( \hat{\vy}^{(k+1)}\!-\!\vy \big)  -  2 (\hat{\vx}^{(k)}\!-\!\hat{\vx}^{(k-1)})^{T} \mD^{T}\big( \hat{\vy}^{(k)}\!-\!\vy \big). 
\end{align} 
Summing \eqref{equ_bound_1333} for $k=0,\ldots,\maxiter-1$, 
\begin{align}
\label{equ_bound_13332}
 2 &  \sum_{k=1}^{\maxiter} \big( \mathcal{L}\big( \hat{\vx}^{(k)}, \vy \big)\!-\! \mathcal{L} \big( \vx, \hat{\vy}^{(k)} \big) \big)   \nonumber \\ 
 & + \big\| \vy\!-\!\hat{\vy}^{(\maxiter)} \big\|^{2}_{{\bf \Lambda}^{-1}} + \big\| \vx\!-\!\hat{\vx}^{(\maxiter)}\big\|^{2}_{{\bf \Gamma}^{-1}} + 
 (1\!-\!\kappa)  \sum_{k=1}^{\maxiter} \big\|\hat{\vy}^{(k)}\!-\!\hat{\vy}^{(k-1)} \big\|^{2}_{{\bf \Lambda}^{-1}}  \nonumber \\[3mm]
&+  (1\!-\!\kappa)  \sum_{k=1}^{\maxiter-1} \big\|\hat{\vx}^{(k)}\!-\!\hat{\vx}^{(k-1)} \big\|^{2}_{{\bf \Gamma}^{-1}}  + \big\|\hat{\vx}^{(\maxiter)}\!-\!\hat{\vx}^{(\maxiter-1)} \big\|^{2}_{{\bf \Gamma}^{-1}}  \nonumber \\[3mm]
& \leq  \big\|\vy\!-\!\hat{\vy}^{(0)} \big\|^{2}_{{\bf \Lambda}^{-1}} +\big\|\vx\!-\!\hat{\vx}^{(0)}\big\|^{2}_{{\bf \Gamma}^{-1}} + 2 (\hat{\vx}^{(\maxiter)}\!-\!\hat{\vx}^{(\maxiter-1)})^{T} \mD^{T}\big( \hat{\vy}^{(\maxiter)}\!-\!\vy \big). 
\end{align} 
Similar to \eqref{equ_bound_11222}, we can also develop the last term on the right hand side of \eqref{equ_bound_13332} as 
\begin{align}
\label{equ_bound_1444}
 (\hat{\vx}^{(\maxiter)}\!-\!\hat{\vx}^{(\maxiter-1)})^{T} \mD^{T}\big( \hat{\vy}^{(\maxiter)}\!-\!\vy \big) \leq (1/2)\kappa\big( \big\| \hat{\vx}^{(\maxiter)}\!-\!\hat{\vx}^{(\maxiter-1)} \big\|^{2}_{{\bf \Gamma}^{-1}}  + 
 \big\| \hat{\vy}^{(\maxiter)}\!-\!\vy \big\|^{2}_{{\bf \Lambda}^{-1}} \big). 
\end{align}
Combining \eqref{equ_bound_1444} with \eqref{equ_bound_13332}, 
\begin{align}
\label{equ_bound_15555}
 2 &  \sum_{k=1}^{\maxiter} \big( \mathcal{L}\big( \hat{\vx}^{(k)}, \vy \big)\!-\! \mathcal{L} \big( \vx, \hat{\vy}^{(k)} \big) \big)   \nonumber \\ 
 & + (1-\kappa) \big\| \vy\!-\!\hat{\vy}^{(\maxiter)} \big\|^{2}_{{\bf \Lambda}^{-1}} + \big\| \vx\!-\!\hat{\vx}^{(\maxiter)}\big\|^{2}_{{\bf \Gamma}^{-1}} + (1\!-\!\kappa)  \sum_{k=1}^{\maxiter} \big\|\hat{\vy}^{(k)}\!-\!\hat{\vy}^{(k-1)} \big\|^{2}_{{\bf \Lambda}^{-1}}  \nonumber \\[3mm]
&+  (1\!-\!\kappa)  \sum_{k=1}^{\maxiter-1} \big\|\hat{\vx}^{(k)}\!-\!\hat{\vx}^{(k-1)} \big\|^{2}_{{\bf \Gamma}^{-1}}  + (1-\kappa) \big\|\hat{\vx}^{(\maxiter)}\!-\!\hat{\vx}^{(\maxiter-1)} \big\|^{2}_{{\bf \Gamma}^{-1}}  \nonumber \\[3mm]
& \leq  \big\|\vy\!-\!\hat{\vy}^{(0)} \big\|^{2}_{{\bf \Lambda}^{-1}} +\big\|\vx\!-\!\hat{\vx}^{(0)}\big\|^{2}_{{\bf \Gamma}^{-1}}, 
\end{align} 
which holds for any $\vx \in \graphsigs$ and $\vy \in \mathbb{R}^{|\edges|}$. 

Since, for fixed $\vy$, the quantity $\mathcal{L}(\vx,\vy)$ is a convex function of $\vx$ and, for a fixed $\vx$, it is a concave function of $\vy$, we have
\begin{equation}
\label{equ_convexity_L_func}
\big( \mathcal{L}\big( \bar{\vx}^{(\maxiter)}, \vy \big)\!-\! \mathcal{L} \big( \vx, \bar{\vy}^{(\maxiter)} \big) \big) \leq (1/\maxiter)  
\sum_{k=1}^{\maxiter} \big( \mathcal{L}\big( \hat{\vx}^{(k)}, \vy \big)\!-\! \mathcal{L} \big( \vx, \hat{\vy}^{(k)} \big) \big)
\end{equation} 
with the averages $\bar{\vx}^{(\maxiter)}$ and $\bar{\vy}^{(\maxiter)}$ (cf. \eqref{equ_def_averages}). 
Combining \eqref{equ_convexity_L_func} with \eqref{equ_bound_15555}, and using the particular choice 
$\vx = \primslp$ and $\vy = \tilde{\vy}^{(\maxiter)}\defeq {\rm sign} \{ \mathbf{D} \bar{\vx}^{(\maxiter)} \}$, 
\begin{equation}
\label{equ_bound_almost_there}
 \mathcal{L}\big( \bar{\vx}^{(\maxiter)}, \tilde{\vy}^{(\maxiter)} \big)\!-\! \mathcal{L} \big( \primslp, \bar{\vy}^{(\maxiter)} \big)   \leq \frac{1}{2 \maxiter} 
\big( \| \hat{\vx}^{(0)} - \primslp \|^{2}_{{\bm \Gamma}^{-1}} + \| \hat{\vy}^{(0)} - \tilde{\vy}^{(\maxiter)} \|^{2}_{{\bm \Lambda}^{-1}}  \big). 
\end{equation} 
The bound \eqref{equ_bound_convergence_SLP} follows from \eqref{equ_bound_almost_there} by noting 
\begin{align}
&  \mathcal{L}\big( \bar{\vx}^{(\maxiter)}, \tilde{\vy}^{(\maxiter)} \big)\!-\! \mathcal{L} \big( \primslp, \bar{\vy}^{(\maxiter)} \big)   \nonumber \\[3mm]
 & = \big( \mathcal{L}\big( \bar{\vx}^{(\maxiter)}, \tilde{\vy}^{(\maxiter)} \big)\!-\! \mathcal{L}\big( \primslp, \dualslp \big)\big)
 \!+\!  \underbrace{\big(\! \mathcal{L} \big( \primslp, \dualslp \big)  - \mathcal{L} \big( \primslp, \bar{\vy}^{(\maxiter)} \big)  \big)}_{\stackrel{\eqref{equ_primal_dual_SLP_saddle_point}}{\geq}0}   \nonumber \\[3mm]
 & \geq \mathcal{L}\big( \bar{\vx}^{(\maxiter)}, \tilde{\vy}^{(\maxiter)} \big)\!-\! \mathcal{L}\big( \primslp, \dualslp \big) \nonumber \\[3mm] 
 & \stackrel{\eqref{equ_primal_dual_SLP_saddle_point}}{=}    \mathcal{L}\big( \bar{\vx}^{(\maxiter)}, \tilde{\vy}^{(\maxiter)} \big)\!-\! f(\primslp) \nonumber \\[3mm] 
  & \stackrel{\eqref{equ_rewrite_cost_fun_SLP}}{=}   \big\|  \bar{\vx}^{(\maxiter)} \big\|_{\rm TV}\!-\! f(\primslp).  
\end{align}

It remains to verify the sequence $\|\tilde{\vy}^{(\maxiter)}-\hat{\vy}^{(0)} \|_{{\bf \Lambda}^{-1}}$ to be bounded. 
To this end, we evaluate \eqref{equ_bound_15555} for the particular choice $\vx=\primslp, \vy=\dualslp$, for which 
$\big( \mathcal{L}\big( \hat{\vx}^{(k)}, \hat{\vy}_{\rm SLP} \big)\!-\! \mathcal{L} \big( \hat{\vx}_{\rm SLP}, \hat{\vy}^{(k)} \big) \big)  \geq 0$ (cf. \eqref{equ_primal_dual_SLP_saddle_point}).
This yields 
\begin{align}
\label{equ_bound_1666}
 \big\| \primslp\!-\!\hat{\vx}^{(\maxiter)}\big\|^{2}_{{\bf \Gamma}^{-1}} 
& \leq  \big\|\dualslp\!-\!\hat{\vy}^{(0)} \big\|^{2}_{{\bf \Lambda}^{-1}} +\big\|\primslp\!-\!\hat{\vx}^{(0)}\big\|^{2}_{{\bf \Gamma}^{-1}}, 
\end{align} 
 which implies that $\big\| \!\hat{\vx}^{(\maxiter)}\big\|^{2}_{{\bf \Gamma}^{-1}}$ and, in turn, 
 $\|\tilde{\vy}^{(\maxiter)}-\hat{\vy}^{(0)} \|_{{\bf \Lambda}^{-1}}=\big\|{\rm sign} \big\{ \mathbf{D} \bar{\vx}^{(\maxiter)} \big\}-\hat{\vy}^{(0)} \big\|_{{\bf \Lambda}^{-1}}$ is bounded.

\vskip 0.2in
\bibliographystyle{abbrv}
\bibliography{SLPBib}

\end{document}